\definecolor{gray97}{gray}{.97}
\definecolor{gray75}{gray}{.75}
\definecolor{gray45}{gray}{.45}
\definecolor{lightgray}{rgb}{.9,.9,.9}
\lstdefinestyle{consola}{
   language=bash,
   stringstyle=\mdseries\rmfamily,
   keywordstyle=\bfseries\rmfamily,
   basicstyle=\scriptsize\bf\ttfamily,
   stepnumber=1,
   backgroundcolor=\color{lightgray},
   extendedchars=true,
   showstringspaces=false,
   showspaces=false,
   tabsize=2,
   breaklines=true,
   showtabs=false,
   captionpos=b,
   rangeprefix=-------------,
   rangesuffix=-------------,
   includerangemarker=false,
   columns=flexible,
   firstnumber=0
}
\lstdefinestyle{CodigoC}
   {basicstyle=\scriptsize,
	frame=single,
	language=C,
	numbers=left
   }
\lstdefinestyle{CodigoC++}
   {basicstyle=\small,
	frame=single,
	backgroundcolor=\color{gray75},
	language=C++,
	numbers=left
   }
\lstdefinestyle{Python}
   {language=Python,    
   }
\newtheorem*{rep@theorem}{\rep@title}
\newcommand{\newreptheorem}[2]{%
\newenvironment{rep#1}[1]{%
 \def\rep@title{#2 \ref{##1}}%
 \begin{rep@theorem}}%
 {\end{rep@theorem}}}
\newcommand{\gl}{\text{GL}}
\theoremstyle{plain}
\newtheorem{proposition}{Proposition}[section]
\theoremstyle{definition}
\newtheorem{definition}{Definition}[section]
\theoremstyle{remark}
\newcommand{\RR}{\mathbb{R}}
\title{Relative Representations:\\ Topological and Geometric Perspectives}
\author{%
 Alejandro Garc\'ia-Castellanos \footnotemark[1]\\
  Amsterdam Machine Learning Lab\\
  University of Amsterdam, Netherlands \\
  \texttt{a.garciacastellanos@uva.nl} 
  \And 
    Giovanni Luca Marchetti\\
   Department of Mathematics \\
  KTH Royal Institute of Technology, Sweden 
  \And
    Danica Kragic\\ 
   Division of Robotics, Perception and Learning \\
  KTH Royal Institute of Technology, Sweden 
\And
 Martina Scolamiero\\
   Department of Mathematics \\
  KTH Royal Institute of Technology, Sweden 
}
\begin{document}

\maketitle

\renewcommand{\thefootnote}{\fnsymbol{footnote}}\footnotetext[1]{Work done at the Division of Robotics, Perception, and Learning at KTH Royal Institute of Technology, Sweden.}

\begin{abstract}
Relative representations are an established approach to zero-shot model stitching, consisting of a non-trainable transformation of the latent space of a deep neural network. Based on insights of topological and geometric nature, we propose two improvements to relative representations. First, we introduce a normalization procedure in the relative transformation, resulting in invariance to non-isotropic rescalings and permutations. The latter coincides with the symmetries in parameter space induced by common activation functions. Second, we propose to deploy topological densification when fine-tuning relative representations, a topological regularization loss encouraging clustering within classes. We provide an empirical investigation on a natural language task, where both the proposed variations yield improved performance on zero-shot model stitching. 
\end{abstract}
\section{Introduction and Related Work}\label{sec:intro}
The ability to infer semantically-rich representations is, perhaps, the cornerstone of the success of contemporary deep learning models \cite{bengio2013representation}. Latent spaces of deep neural networks extract features from data that are general and transferable, meaning that, after training a model on a given task, its representations can be leveraged upon for addressing a variety of related tasks. This principle is nowadays exploited in the form of \emph{foundation models} \cite{bommasani2021opportunities} -- neural networks trained via self-supervision on large-scale multi-task datasets to extract ready-to-use representations.

Surprisingly, it has been argued that neural networks trained on diverse tasks, architectures, and domains, infer structurally similar representations \cite{morcos2018insights, kornblith2019similarity, vulic2020all} -- a hypothesis sometimes referred to as `representational universality' or `convergent learning' \cite{li2015convergent}. For example, there is extensive empirical evidence that the representations extracted by neural networks are isometric (up to scale) -- i.e., are related by an affine distance-preserving transformation -- as the hyperparameters and initialization vary \cite{moschella_relative_2022}. Even though partial theoretical explanations have been proposed -- e.g., based on harmonic analysis \cite{marchetti2024harmonics, chughtai2023toy} or on philosophical principles \cite{huh2024platonic} -- these phenomena remain mostly mysterious. Nonetheless, they have motivated the introduction of techniques for zero-shot transfer and latent space communication, such as \emph{model stitching} \cite{lenc2015understanding, bansal2021revisiting, csiszarik2110similarity} -- a method consisting in a trainable layer that connects latent representations of two different networks. On a similar note, \emph{relative representations} \cite{moschella_relative_2022, cannistraci2023bricks} involve a non-trainable isometry-invariant layer on top of the representation. Assuming the representational universality hypothesis, factoring out rigid transformations results in features that are independent of nuances in initialization, hyperparameters, and, to some extent, task and architecture.  

In this work, we propose two improvements of relative representations, enhancing the resulting zero-shot model stitching. These improvements are inspired by insights from geometry and topology, respectively. 

\paragraph{Geometric Perspective.} From the geometric side, we consider symmetries of neural networks \cite{grigsby2023hidden, fefferman_testing_2016}, i.e., transformations of the weights that do not alter the function defined by the network. These transformations are sometimes referred to as \emph{intertwiners} \cite{godfrey_symmetries_2023}, and form a group that depends on the activation function. We argue that these symmetries are partially responsible for the universality of representations with respect to initialization and training noise, and consequently design a relative transformation that is invariant to the intertwiner group of common activations. Our idea is simple: we propose to deploy batch normalization before the relative representation layer. This factors out non-isotropic rescalings, which are the non-isometric transformations induced by intertwiner groups for a wide class of activation functions.

\paragraph{Topological Perspective.} From the topological side, we draw inspiration from topological data analysis and, in particular, from methods for \emph{topological regularization} \cite{moor2020topological, hofer_densified_2021, hofer_connectivity-optimized_2019, math11041008, trofimov2023learning, chen2019topological}. The latter aims at enforcing specific topological features in the latent representation of a neural network. To this end, a recently-introduced method deemed topological \emph{densification} \cite{hofer_densified_2021} forces data classes to be represented in compressed clusters, resulting in a representation that is coherent with the decision boundaries of the neural network. We propose to deploy topological densification in conjunction with relative representations, and explore various alternatives for combining the two. Our intuition is that consistent densified representations share a similar topology, and are therefore more universal, while still preserving generality and transferability. Moreover, topological regularization prevents potential overfitting of large pre-trained models -- such as foundation models -- when fine-tuning them via relative transformations in low data regimes.

We implement and validate empirically both of the above proposals in an experimental scenario similar to the original work \cite{moschella_relative_2022}. The scenario consists of a natural language task, where domains correspond to different languages, and model stitching enables zero-shot translation. Results show that both our variant of the relative transformation and the additional topological densification result in significantly improved performance with respect to the original version. In summary, our contributions are:
\begin{itemize}
    \item  A novel normalized variant of the relative representation that is invariant to the intertwiner group of common activation functions.
    \item The introduction of topological densification for fine-tuning relative representations.
    \item An empirical investigation on a natural language task, showcasing improved performance. 
\end{itemize}

\section{Background}
\subsection{Relative Representations}\label{sec:relRep}
In this section, we overview \emph{relative representations}  \cite{moschella_relative_2022} -- a technique enabling zero-shot model stitching. The core idea is introducing a transformation in latent spaces that increases compatibility, without the need for training additional components. As explained in Section \ref{sec:intro}, the approach is based on empirical evidence that latent representations inferred by neural networks are, usually, close to being isometric (up to scale).  


Let $\varphi\colon \mathcal{X} \to \mathcal{Z}$ be a representation, i.e., a map encoding data in $\mathcal{X}$ to a latent space $\mathcal{Z}$. Moreover, let $\mathcal{A} = \{a_1, \ldots,  a_k\}  \subset \mathcal{Z}$ be a set whose elements are referred to as \emph{anchors}, and let $\text{sim} \colon \mathcal{Z} \times \mathcal{Z} \rightarrow \RR$ be a function 
 representing a measure of similarity on $\mathcal{Z} $. 

\begin{definition}[\cite{moschella_relative_2022}] \label{def:reltrans}
The \emph{relative representation} of $z \in \mathcal{Z}$ w.r.t. $\mathcal{A}$ is
\[
T_{\text{rel}}^\mathcal{A}(z) = (\text{sim}(z, a_1), 
\ldots, \text{sim}(z, a_k)) \in \RR^k\,.
\]
\end{definition}
If $\mathcal{Z} = \RR^m$ and $\textnormal{sim}= z \cdot z' / \| z\| \| z' \|$ is cosine similarity, then $T_{\text{rel}}$ is invariant to linear isometries and rescalings. More precisely, if $z$ and all the anchors in $\mathcal{A}$ are transformed via $z \mapsto \alpha U z$ \,  --  where $\alpha \in \RR$, and $U$ is an orthogonal $m \times m$ matrix -- obtaining $z'$ and $\mathcal{A}'$, then $T_{\text{rel}}^\mathcal{A}(z) = T_{\text{rel}}^\mathcal{A'}(z') $.

Relative representations enable zero-shot model stitching as follows. If $f_1 = \gamma_1 \circ \varphi_1$ and $f_2 = \gamma_2 \circ  \varphi_2$ are two neural networks with the same architecture decomposed in a representation map $\varphi_\bullet$ and a head $\gamma_\bullet$, then one can replace $\gamma_1$ with $\gamma_2 \circ T_\text{rel}^\mathcal{A}$ to exploit the representation inferred by $f_1$ in conjunction with the head of $f_2$, and vice versa. 
\begin{wrapfigure}{r}{0.55\textwidth}
\centering
\includegraphics[width=\linewidth]{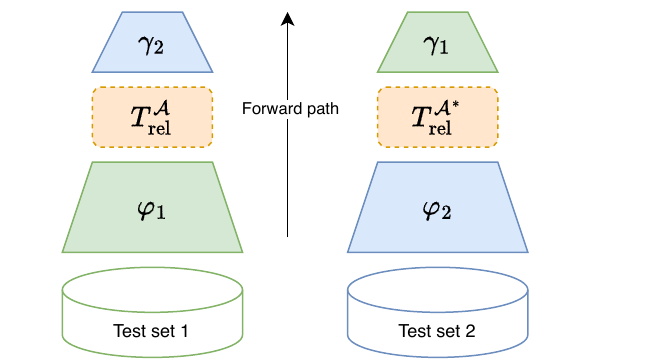}
\caption{Cross-domain model stitching.}
\label{fig:modSticth}
\end{wrapfigure}
Assuming the representational universality hypothesis, this model stitching procedure extends to \emph{cross-domain} setups, i.e., for two networks $f_1$ and $f_2$ that are trained over two distinct datasets from the same semantic domain (see Figure~\ref{fig:modSticth}). 
In this case, to obtain coherent anchors, we fix $\mathcal{A}$ as an encoding via $\varphi_1$ of some data, and exploit a known correspondence between the datasets -- based on domain knowledge -- to obtain anchors $\mathcal{A}^*$ as encodings via $\varphi_2$ of the corresponding data. Lastly, for our purposes it will be convenient to resort to an end-to-end training setup, similarly to \cite{lahner_direct_2023}. This consists in deploying pre-trained representation maps $\varphi_1, \varphi_2$, and training the networks $\gamma_1 \circ T_{\textnormal{rel}}^\mathcal{A}  \circ \varphi_1$ and $\gamma_2 \circ T_{\textnormal{rel}}^{\mathcal{A}^*} \circ \varphi_2$ with the weights of both representation maps unfrozen, using their corresponding domain-specific dataset.

\subsection{Symmetry Groups of Activation Functions}\label{sec:intertwiner}
In this section, we overview a recent work that investigates the symmetries in neural networks \cite{godfrey_symmetries_2023}. The central idea is that certain activation functions induce similar symmetries in both weight space and latent representations. These symmetries generate the so-called \emph{intertwiner group}. 

Let $\gl_n$ be the groups of $n \times n$ invertible matrices and $\sigma \colon \RR \rightarrow \RR$ a function. The latter represents the activation function of a neural network, and we will consequently extend it coordinate-wise as a map $\sigma \colon \RR^n \rightarrow \RR^n$.

\begin{definition}  \label{def:intertwiner}
The \emph{intertwiner group} of $\sigma$ is:
\begin{equation*}
    G_{\sigma}^n = \{A \in \gl_{n} \, \mid \, \exists B\in
  \gl_{n}  \colon   \sigma \circ A = B \circ \sigma \}.
\end{equation*} 
\end{definition}

Suppose that $\sigma (I_n)$ is invertible, and for each $A \in \gl_{n}$ define $\lambda_\sigma(A) = \sigma(A)   \sigma (I_n)^{-1}$. It can be shown that, under the mild condition on $\sigma$, $G_\sigma^n$ is a subgroup of $\gl_n$ and $\lambda_\sigma\colon  G_{\sigma}^n \rightarrow  \gl_{n}$ is a homomorphism  such that $\sigma \circ A = \lambda_{\sigma}(A) \circ \sigma$. Moreover, for common activation functions such as ReLU, GELU, and sigmoid, all the elements of $G_\sigma^n$ can be decomposed as a product of a permutation matrix and a diagonal one \cite{godfrey_symmetries_2023}. Since permutation matrices are isometries and isotropic diagonal matrices are (non-isotropic) rescalings, this draws a connection with the transformations factored out by the relative representations from Section \ref{sec:relRep}.

The following elementary result shows that symmetries from the intertwiner group induce symmetries in the latent representations of a deep neural network. Let $f(x,W)$ be a multi-layer perceptron with input $x$, activation function $\sigma$, and layer-wise weights $W = (W_1, b_1, \ldots, W_l, b_l)$ with $W_i \in \RR^{n_i \times n_{i-1}}$ and $b_i \in \RR^{n_i}$. For each $1 \leq m \leq l$, consider the decomposition $f = \gamma_{m}\circ \varphi_{ m}$ into the latent representation at the $m$-th layer and the corresponding head.  

\begin{proposition}[\cite{godfrey_symmetries_2023}]
  \label{lem:comm-w-sig}
  For each $1 \leq i < l$ pick $A_i \in G_{\sigma}^{n_i}$, and consider 
  \begin{equation*}
  \widetilde{W}  = (A_1 W_1, A_1b_1, A_2 W_2 \lambda_{\sigma}(A_1^{-1}), A_2 b_2, \ldots,
  W_{l}\lambda_{\sigma}(A_{l-1}^{-1}), b_{l}).
  \end{equation*}
  Then for each $m$:
  \begin{align*}
       \varphi_{ m}(x, \widetilde{W} ) &= \lambda_\sigma(A_m) \circ \varphi_{m}(x, W), \\
       \gamma_{m}(x, \widetilde{W} ) &= \gamma_{m}(x, W) \circ \lambda_{\sigma}(A_m)^{-1}.
  \end{align*}
In particular, $f(x, \widetilde{W} ) = f(x, W)$ for all $x \in \RR^{n_0}$.
\end{proposition}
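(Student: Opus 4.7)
The plan is to prove the statement about $\varphi_m$ by induction on $m$, and then derive the statement about $\gamma_m$ as a formal consequence. The key ingredient is the intertwiner relation $\sigma \circ A = \lambda_\sigma(A) \circ \sigma$ for $A \in G_\sigma^n$, together with the fact that $\lambda_\sigma$ is a group homomorphism, which in particular gives $\lambda_\sigma(A^{-1}) = \lambda_\sigma(A)^{-1}$.

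For the base case $m = 1$, I would directly unfold the definition of $\varphi_1$ in the modified network. Since the first layer of $\widetilde{W}$ has weight $A_1 W_1$ and bias $A_1 b_1$, the pre-activation is $A_1(W_1 x + b_1)$. Applying $\sigma$ and the intertwiner relation for $A_1$ yields $\lambda_\sigma(A_1)\, \sigma(W_1 x + b_1) = \lambda_\sigma(A_1)\, \varphi_1(x, W)$, as required.

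For the inductive step, assume $\varphi_m(x, \widetilde{W}) = \lambda_\sigma(A_m)\, \varphi_m(x, W)$. The $(m{+}1)$-th layer in $\widetilde{W}$ has weight $A_{m+1} W_{m+1} \lambda_\sigma(A_m^{-1})$ and bias $A_{m+1} b_{m+1}$. Feeding in $\varphi_m(x, \widetilde{W})$ gives the pre-activation
\begin{equation*}
A_{m+1} W_{m+1} \lambda_\sigma(A_m^{-1})\, \lambda_\sigma(A_m)\, \varphi_m(x, W) + A_{m+1} b_{m+1} = A_{m+1}\bigl(W_{m+1}\, \varphi_m(x, W) + b_{m+1}\bigr),
\end{equation*}
where I used that $\lambda_\sigma$ is a homomorphism so the two middle factors cancel. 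Applying $\sigma$ and the intertwiner relation for $A_{m+1}$ then produces $\lambda_\sigma(A_{m+1})\, \varphi_{m+1}(x, W)$, closing the induction. The same computation at layer $l$, where the weight is $W_l \lambda_\sigma(A_{l-1}^{-1})$ with no outer $A_l$ factor, produces exactly $f(x, W)$, which gives the final invariance claim.

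Finally, the statement about $\gamma_m$ follows formally: since $f = \gamma_m \circ \varphi_m$ holds for both $W$ and $\widetilde{W}$, and we have just shown $f(x, \widetilde{W}) = f(x, W)$ and $\varphi_m(\cdot, \widetilde{W}) = \lambda_\sigma(A_m) \circ \varphi_m(\cdot, W)$, one obtains $\gamma_m(\cdot, \widetilde{W}) = \gamma_m(\cdot, W) \circ \lambda_\sigma(A_m)^{-1}$ by composing on the right with $\lambda_\sigma(A_m)^{-1}$. I do not expect any serious obstacle: the argument is essentially bookkeeping, and the only point requiring care is keeping the indices straight at the last layer (where no new $A_l$ is introduced) and consistently using that $\lambda_\sigma$ is a homomorphism so the inserted factors telescope away.
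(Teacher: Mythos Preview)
The paper does not actually supply a proof of this proposition: it is quoted from \cite{godfrey_symmetries_2023} and stated without argument, so there is no ``paper's own proof'' to compare against. Your inductive argument for $\varphi_m$ and for the invariance $f(x,\widetilde W)=f(x,W)$ is the standard one and is correct as written; the telescoping via $\lambda_\sigma(A_m^{-1})\lambda_\sigma(A_m)=I$ and the intertwiner identity $\sigma\circ A=\lambda_\sigma(A)\circ\sigma$ are exactly the two ingredients needed.

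One small point deserves tightening. Your derivation of the $\gamma_m$ identity is purely formal: from $\gamma_m(\cdot,W)\circ\varphi_m(\cdot,W)=\gamma_m(\cdot,\widetilde W)\circ\lambda_\sigma(A_m)\circ\varphi_m(\cdot,W)$ you can only conclude that $\gamma_m(\cdot,\widetilde W)=\gamma_m(\cdot,W)\circ\lambda_\sigma(A_m)^{-1}$ on the image of $\varphi_m(\cdot,\widetilde W)$, not on all of $\RR^{n_m}$, since $\varphi_m$ need not be surjective. The proposition asserts equality of the heads as functions. This is easily repaired: run the same layer-by-layer computation directly on $\gamma_m$, feeding an arbitrary $y\in\RR^{n_m}$ into the $(m{+}1)$-th layer of $\widetilde W$. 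The first factor $\lambda_\sigma(A_m^{-1})$ acts on $y$, and thereafter the argument is identical to your induction, yielding $\gamma_m(y,\widetilde W)=\gamma_m(\lambda_\sigma(A_m)^{-1}y,W)$ for every $y$.
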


The above symmetries provide a theoretical explanation for the emergence of structurally-similar representations in networks with different initializations. 

\subsection{Topological Densification}\label{sec:topoDens}

In this section, we recall the basics of hierarchical clustering, and review the topological densification method from \cite{hofer_densified_2021}. To this end, let $\mathcal{X}$ be a metric space with distance function $d \colon \mathcal{X} \times \mathcal{X} \rightarrow \mathbb{R}_{\geq 0}$, and $\mathcal{D} \subset  \mathcal{X}$ be a finite subset. In practice, $\mathcal{D}$ will represent a dataset in an ambient space $\mathcal{X}$, or in a representation.  

\begin{definition}  
Given $\varepsilon \in \mathbb{R}_{>0}$, the \emph{truncation graph} $\Gamma_\varepsilon(\mathcal{D})$ is the finite undirected graph with elements of $\mathcal{D}$ as vertices and an edge between $x,y \in \mathcal{D}$ if, and only if, $d(x,y) < \varepsilon$. 
\end{definition}

\begin{wrapfigure}{r}{0.4\textwidth}
\centering
\includegraphics[width=0.98\linewidth]{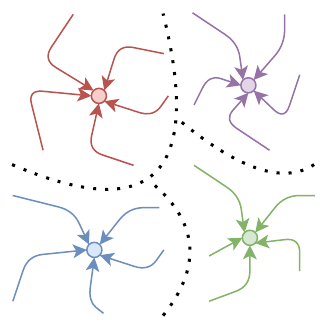}
\caption{Effect of topological densification.}
\label{fig:topodens}
\end{wrapfigure}
Connected components of the truncation graph can be interpreted as clusters of data. This is the idea behind density-based spatial clustering methods \cite{ester1996density}. Instead, \emph{hierarchical} density-based clustering \cite{campello2013density} -- as well as the related notion of $0$-th persistent homology -- considers the evolution of clusters as the parameter $\varepsilon$ varies. For $\varepsilon=0$ all the points in $\mathcal{D}$ form a distinct connected component. As $\varepsilon$ grows, edges are added to the truncation graph. This implies that if $\varepsilon \leq \delta$, each component in $\Gamma_\varepsilon(\mathcal{D})$ is contained in a component of $\Gamma_\delta(\mathcal{D})$. Moreover, two distinct components in $\Gamma_\varepsilon(\mathcal{D})$ can merge together and become connected in $\Gamma_\delta(\mathcal{D})$. The value of $\varepsilon$ at which two connected components merge is called a $\emph{death time}$. Since more than two connected components can merge at the same parameter value, death times form a multi-set in $\mathbb{R}_{\geq 0}$ with multiplicity given by the number of merged components minus one. This multi-set is denoted by $\dagger(\mathcal{D})$, and it is equivalent to the \emph{persistence diagram} of $0$-dimensional persistent homology. It can be shown that death times coincide  exactly with the lengths of the edges of the minimum spanning tree of $\mathcal{D}$, i.e., the connected tree with vertices $\mathcal{D}$ and that is of minimal total edge length. This enables to compute the persistence diagram via, for example, Kruskal's algorithm \cite{kruskal1956shortest}, whose time complexity is $\mathcal{O}(|\mathcal{D}|^2 \log |\mathcal{D}|)$.

The concept of connectivity presented above can be exploited to design an optimization objective for deep learning, encouraging the model to distribute data densely inside its corresponding decision boundaries. To this end, in the notation of Section \ref{sec:intertwiner}, let $\varphi  \colon \mathcal{X} \rightarrow \mathcal{Z}$ be an encoder mapping data to a latent space $\mathcal{Z}$ equipped with a metric $d$. In practice, we usually have $\mathcal{Z}=\mathbb{R}^m$, equipped with Euclidean metric. Moreover, suppose that data is subdivided into classes, meaning that there is a partition $\mathcal{D} = \mathcal{D}_1 \sqcup \cdots \sqcup \mathcal{D}_K$, where $K$ is the number of classes and $\mathcal{D}_i$ is the data in the $i$-th class. 

\begin{definition}[\cite{hofer_densified_2021}]\label{def:topdens}
Given a hyperparameter $\beta \in \mathbb{R}_{>0}$, the \emph{topological densification loss} is:
\begin{equation*}
\mathcal{R} = \sum_{i=1}^{K} \sum_{w \in \dagger(\varphi(\mathcal{D}_i))} |w - \beta|.  
\end{equation*}
\end{definition}
For a Euclidean latent space $\mathcal{Z} = \mathbb{R}^m$, it can be shown that $\mathcal{R}$ is differentiable w.r.t. $\varphi(\mathcal{D}_i)$ almost everywhere, with an explicit expression for the derivatives. This allows to deploy gradient-based methods to minimize the topological densification loss over (the parameters of) $\varphi$.   

We remark that the content of this section can be extended to higher-dimensional \emph{persistent homology} \cite{edelsbrunner2022computational}. The above definitions corresponds to an explicit construction of the $0$-dimensional persistent homology of the \emph{Vietoris-Rips complex}, given that in this case all the birth times are $0$. By considering $i$-th dimensional homology, it is possible to define persistence diagrams for every $i\geq 0$ -- see Appendix (Section~\ref{sec:exTH}). The topological loss can then be extended based on features of the points in the corresponding persistence diagrams. Moreover, other filtration constructions, such as the more efficient Witness complexes \cite{silva_topological_2004}, can replace the Vietoris Rips complex. However, following \cite{hofer_densified_2021}, we focus on the $0$-dimensional Vietoris-Rips persistent homology for simplicity, leaving further investigation of higher-dimensional extensions and other filtrations for future work.

\section{Method}
\subsection{Robust Relative Transformation}\label{sec:robreltrans}
In this section, we introduce a variation of the relative transformation that makes it invariant to the intertwiner group induced by common activation functions. This factors out non-isotropic rescalings in the relative representation, resulting in more robust model stitching. The idea behind our new relative representation boils down to introducing a Gaussian normalization with respect to a batch of data, i.e., a simple form of \emph{batch normalization} \cite{ioffe2015batch} (without learnable parameters). 

In the notation of Section \ref{sec:relRep}, let $\varphi\colon \mathcal{X} \to \mathcal{Z} = \RR^m$ be a representation and $\mathcal{A} = \{a_1, \ldots,  a_k\}  \subset \mathcal{Z}$ be a set of anchors. Moreover, the following definition will depend on an additional set $\mathcal{B} \subset \mathcal{Z}$, which in practice will correspond to the representation of a batch of data. For $z \in \mathcal{Z}$ we denote by $\widehat{z}^\mathcal{B}$ its Gaussian normalization w.r.t. $\mathcal{B}$, obtained by subtracting to $z$ the mean of $\mathcal{B}$ and by dividing each component of $z$ by the standard deviation of $\mathcal{B}$ in the corresponding direction. Technically, we assume that the standard deviations of $\mathcal{B}$ are non-vanishing, which is a generic condition. 

\begin{definition}
The \emph{robust relative representation} of $z \in \mathcal{Z}$ w.r.t. $\mathcal{A}$ and $\mathcal{B}$ is
\[
T_{\textnormal{rob}}^{\mathcal{B}, \mathcal{A}}(z) = T_{\textnormal{rel}}^{\widehat{\mathcal{A}}^\mathcal{B}}(\widehat{z}^\mathcal{B}).
\]
\end{definition}


Intuitively, the introduction of the normalization transforms all the components of the latent space to the same canonical scale induced by $\mathcal{B}$. Therefore, the robust version is invariant to scaled permutations and shifts, as shown by the following result.  

\begin{proposition}
\label{prop:rob_rel}

Suppose that $\textnormal{sim}$ is the cosine similarity, and consider an $m \times m$ permutation matrix $P$, a diagonal one $D$, and a vector $h \in \RR^m$. Denote by $\mathcal{A'}, \mathcal{B'}, z'$ the image of $\mathcal{A}, \mathcal{B}, z$ via $z \mapsto DPz + h$, respectively. Then: 
\[
T_{\textnormal{rob}}^{\mathcal{B}, \mathcal{A}}(z) = T_{\textnormal{rob}}^{\mathcal{B'}, \mathcal{A'}}(z')
\]
\end{proposition}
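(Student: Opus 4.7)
The plan is to track how the Gaussian normalization $\widehat{\cdot}^{\mathcal{B}}$ transforms under the affine map $z \mapsto DPz + h$, and then reduce the claim to the invariance of $T_{\textnormal{rel}}$ (with cosine similarity) under linear isometries, which was already noted right after Definition~\ref{def:reltrans}.

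First I would compute the mean and the coordinatewise standard deviation of the transformed batch $\mathcal{B}' = DP\mathcal{B} + h$. Linearity of the mean gives $\mu' = DP\mu + h$, where $\mu$ is the mean of $\mathcal{B}$. For the standard deviation, let $\pi$ be the permutation with $(Pz)_i = z_{\pi(i)}$: each coordinate of a point in $\mathcal{B}'$ equals $D_{ii} z_{\pi(i)} + h_i$, so the $i$-th coordinate standard deviation of $\mathcal{B}'$ is $\sigma'_i = |D_{ii}|\,\sigma_{\pi(i)}$, where $\sigma_j$ denotes the $j$-th coordinate standard deviation of $\mathcal{B}$. Plugging these into the definition of $\widehat{z'}^{\mathcal{B}'}$ and simplifying coordinate-by-coordinate yields
\[
\widehat{z'}^{\mathcal{B}'} \;=\; SP\, \widehat{z}^{\mathcal{B}},
\]
where $S$ is the diagonal matrix with entries $\textnormal{sign}(D_{ii})\in\{\pm 1\}$. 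The identical computation, applied to each anchor, gives $\widehat{\mathcal{A}'}^{\mathcal{B}'} = SP\, \widehat{\mathcal{A}}^{\mathcal{B}}$.

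Since $P$ is orthogonal and $S$ is diagonal with $\pm 1$ entries, the product $SP$ is itself a linear isometry of $\RR^m$. Applying the isometry invariance of the cosine-similarity based $T_{\textnormal{rel}}$ to the set $\widehat{\mathcal{A}}^{\mathcal{B}}$ and the point $\widehat{z}^{\mathcal{B}}$ then gives
\[
T_{\textnormal{rob}}^{\mathcal{B}', \mathcal{A}'}(z') \;=\; T_{\textnormal{rel}}^{SP\,\widehat{\mathcal{A}}^{\mathcal{B}}}\!\left(SP\,\widehat{z}^{\mathcal{B}}\right) \;=\; T_{\textnormal{rel}}^{\widehat{\mathcal{A}}^{\mathcal{B}}}\!\left(\widehat{z}^{\mathcal{B}}\right) \;=\; T_{\textnormal{rob}}^{\mathcal{B}, \mathcal{A}}(z),
\]
which is the claim.

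The main subtlety I expect is the appearance of the sign matrix $S$: because coordinatewise standard deviations are sensitive only to $|D_{ii}|$, negative entries of $D$ are not fully absorbed by the normalization and re-emerge as signs in the numerator. Fortunately $S$ is itself orthogonal, so it combines with $P$ into a bona fide linear isometry, exactly the kind of symmetry that the original relative representation already factors out. A minor implicit assumption is that $D$ is invertible and that the coordinatewise standard deviations of $\mathcal{B}$ are nonvanishing; the latter was already stipulated in the definition, and the former is the natural setting for the intertwiner group elements considered in Section~\ref{sec:intertwiner}.
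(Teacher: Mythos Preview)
Your proof is correct and follows essentially the same route as the paper's: show that Gaussian normalization converts $z\mapsto DPz+h$ into left multiplication by an orthogonal matrix, then invoke the orthogonal invariance of cosine similarity. You are in fact slightly more careful than the paper, which writes the standard-deviation matrix of $\mathcal{B}'$ as $P\Sigma P^\top D$ (tacitly taking $D_{ii}>0$) and obtains $\widehat{z'}^{\mathcal{B}'}=P\,\widehat{z}^{\mathcal{B}}$, whereas your sign matrix $S$ handles arbitrary invertible diagonal $D$ while still yielding an orthogonal factor $SP$.
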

\begin{proof}
Let $\mu$ be the mean of $\mathcal{B}$ and $\Sigma$ be the diagonal matrix of its standard deviations in the corresponding components. Then, by definition, $\widehat{z}^{\mathcal{B}} = \Sigma^{-1}(z - \mu)$. The mean and standard deviations of $\mathcal{B}'$ are $DP\mu + h$ and $P \Sigma P^\top D$, respectively. Denote $\widetilde{D} = P^\top D P$. Then: 
\begin{align*}
    \widehat{z'}^\mathcal{B'} & = \widehat{DPz + h}^{\mathcal{B'}} \\
    &= (P \Sigma P^\top D)^{-1} (DPz + h - (DP\mu + h)) \\
    &= D^{-1}P \Sigma^{-1} P^\top D P (z-\mu)\\
    &= D^{-1}P \Sigma^{-1} \widetilde{D} (z-\mu) \\
    &= D^{-1}P \widetilde{D} \Sigma^{-1}(z-\mu) &   (\widetilde{D} \text{ and } \Sigma^{-1} \text{ are diagonal})\\
    & = P \Sigma^{-1} (z -\mu) ,
\end{align*}
and similarly for $\widehat{a_i'}^{\mathcal{B'}}$ for every $a_i \in \mathcal{A}$. Since the cosine similarity is invariant to linear isometries (i.e., orthogonal matrices), we obtain: 
\[
    \textnormal{sim}\left( \widehat{a_i'}^{\mathcal{B'}}, \ \widehat{z'}^{\mathcal{B'}}\right) =   \textnormal{sim}\left( \Sigma^{-1} (a_i-\mu), \ \Sigma^{-1} (z-\mu) \right) =   \textnormal{sim}\left( \widehat{a_i}^{\mathcal{B}}, \ \widehat{z}^{\mathcal{B}} \right),  
\]
which implies the desired claim. 
\end{proof}
As a consequence, the robust relative transformation is invariant to the intertwiner groups induced by common activation functions (e.g. GELU, ReLU, sigmoid) -- see discussion after Definition~\ref{def:intertwiner}. Note that this invariance property differs from the one satisfied by the original version of the relative transformation (Definition~\ref{def:reltrans}). The latter is invariant to isotropic rescalings -- or, equivalently, to diagonal matrices with equal diagonal entries -- and linear isometries. Therefore, our version trades off invariance to isometries other than permutations with more general non-isotropic rescalings. We claim that this tradeoff is advantageous in high dimensions. According to \cite{barvinok2005approximating}, the action of an arbitrary orthogonal matrix on a normally distributed random vector $z \in \mathbb{R}^m$ can be approximated by a permutation of its coordinates, with a relative approximation error that diminishes as dimension increases at a rate of $\mathcal{O}((\ln m)/\sqrt{m})$. Consequently, despite the variability of permutation with respect to $z$, achieving full permutation invariance yields approximate invariance to arbitrary orthogonal transformations as $m \rightarrow \infty$. Since latent spaces of contemporary deep learning models are typically high-dimensional, the robust relative transformation approximately exhibits, to an extent, invariance to arbitrary isometries, 
together with the added invariance to arbitrary non-isotropic rescalings. We empirically compare the robust version with the classical one as part of our experimental investigation -- see Section \ref{sec:comprel}. 



\subsection{Topological Densification of Relative Representations}\label{sec:maintopometh}
In this section, we explore various options and improvements for combining the topological densification loss (Definition \ref{def:topdens}) with (robust) relative representations. Since topological features of high-dimensional spaces encode semantic information \cite{wayland_mapping_2024}, this additional regularization is expected to improve both the performance of the individual models, as well their zero-shot stitching capabilities, especially in low-data regimes. Moreover, to further improve the latent space similarity between models, we will apply a consistent regularization by using the same $\beta$ hyperparameter.

A fundamental choice is whether to apply the topological densification loss before or after the relative transformation. We refer to these setups as \emph{pre-relative} and \emph{post-relative}, respectively. Figure~\ref{fig:topoSetups} summarizes these options, including their combination consisting of regularizing both before and after the transformation. We will evaluate and compare all of these options empirically in Section \ref{sec:exps}. 


\begin{figure}[ht!]
     \centering
    \begin{subfigure}[b]{0.32\textwidth}
         \centering
         \includegraphics[width=0.9\textwidth]{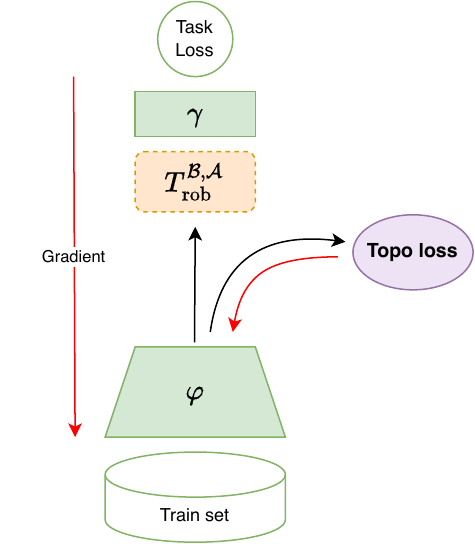}
        \caption{Pre-relative}
         \label{fig:relativeTopoPreScheme}
     \end{subfigure}
     \hfill
      \begin{subfigure}[b]{0.32\textwidth}
         \centering
         \includegraphics[width=0.9\textwidth]{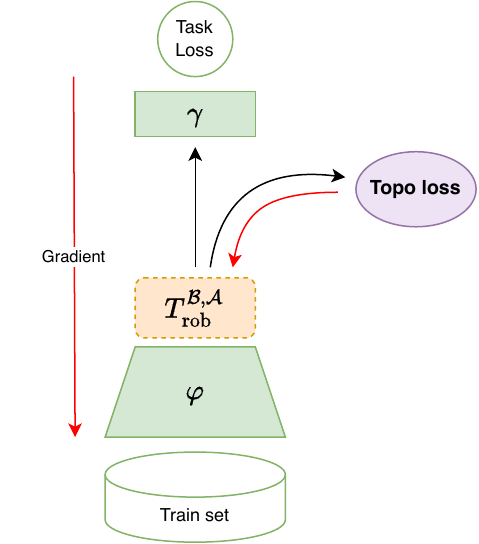}
        \caption{Post-relative}
         \label{fig:relativeTopoPostScheme}
     \end{subfigure}
     \hfill
       \begin{subfigure}[b]{0.32\textwidth}
         \centering
         \includegraphics[width=0.9\textwidth]{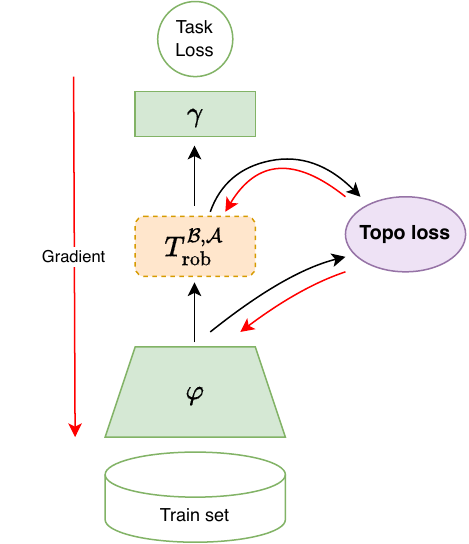}
        \caption{Combined}
         \label{fig:relativeTopoBothScheme}
     \end{subfigure}
    \caption{Different topological regularization setups for the relative transformation.}
    \label{fig:topoSetups}
\end{figure}




Additionally, we consider the problem of constructing data batches for the topological densification loss. Batching is necessary for stochastic gradient descent and, in general, for efficiency. However, topological losses are notoriously subtle to implement in a batched fashion since, intuitively, persistent homology captures global features in data spaces, which cannot be extracted from those of the batches. The original work \cite{hofer_densified_2021} suggests constructing batches where each data class is equally represented. This is implemented by sampling (with replacement) data from the same class. However, this approach can cause conflicts with other components of the model. In particular, it is incompatible with batch normalization, which is a fundamental ingredient in our robust relative transformation. To address these challenges, we introduce a novel batch construction method. The latter consists in aggregating sub-batches sampled from class-specific datasets, with an additional sub-batch sampled from the original dataset. This results in an efficient batch construction procedure that is compatible with both the topological densification loss and batch normalization. A more detailed description is provided in the Appendix (Section~\ref{sec:appbatch}).

\section{Experiments}\label{sec:exps}
In this section, we provide an empirical investigation of our proposed improvements to relative representations. To this end, we train Transformer models on a natural language task. The task is split across languages, corresponding to different domains. The cross-domain stitching procedure enables to transfer between languages, i.e., it implements a form of zero-shot translation. 

\paragraph{Data.} We perform a random subsampling of 1\% of the Amazon Reviews dataset~\cite{keung_multilingual_2020}, where the task consists in predicting user ratings in a range from $1$ to $5$ stars. We consider such task in two languages: English (`\texttt{en}') and French (`\texttt{fr}'). The anchor sets $\mathcal{A}$ and $\mathcal{A}^*$ (notation from Section \ref{sec:relRep}) consist of translated texts. Specifically, we first select a number of anchors from the English dataset equal to the dimensionality of the latent space ($768$ dimensions), and then translate them to other languages using Google Translate. A Python code implementing our models and experiments is available at a public repository: \url{https://github.com/AGarciaCast/TopoRobRelTrans}.  
    
\paragraph{Models and Training.} We use pre-trained RoBERTa models \cite{liu_roberta_2019} as the base for our representation maps $\varphi$, and a single layer on top of these representations for the head $\gamma$. The models are fine-tuned end-to-end on the task described above via two approaches: the `Absolute' case, where no relative transformation is applied during training, and the `Relative' case, where the relative transformation is employed, as described at the end of Section \ref{sec:relRep}. We train the models via the AdamW optimizer \cite{loshchilov2017decoupled} with a layer-wise learning rate decay on the parameters of $\varphi$. The initial learning rate is $3.5 \cdot 10^{-5}$ and decay rate of $0.65$. The learning rate of $\gamma$ is $2 \cdot 10^{-4}$. Training is performed over $40$ epochs, with batch size of $16$ and gradient accumulation every 6 steps. Additionally, we use a linear cyclic scheduler for the weights of the topological densification loss, which is a common strategy for optimizing combined objectives \cite{fu_cyclical_2019}. For the relative case, we update the $768$ anchor embeddings every $500$ optimization steps. This reduces the computational cost, and has minor effects on training stability due to the learning rate schedule for $\varphi$.

\paragraph{Evaluation Metrics.}
The performance of the models is evaluated using three metrics: the Mean Absolute Error (MAE) between user ratings seen as integers, the $\textnormal{F}_1$ classification score, and the classification accuracy (Acc). All the scores are multiplied by $100$ for better readability. As depicted in Figure~\ref{fig:modSticth}, to test the stitching performance, we match the representation network's language with the test dataset's language, while the classification head is drawn from a network trained in the other language.

Our experimental setup is similar to the original work \cite{moschella_relative_2022}. However, in order to test the models in a more challenging fine-tuning scenario, we train our models on fewer data (only 1\% of the Amazon Reviews dataset). Moreover, we deploy a simpler linear head $\gamma$, instead of a deeper network. This naturally results in worse reported performance as compared to the original work.

\subsection{Comparison of Relative Transformations}\label{sec:comprel}
In our first experiment, we compare the robust relative transformation (Section \ref{sec:robreltrans}) with the original one, showcasing improved performance on the considered cross-domain stitching scenario. To this end, we experiment with both the original relative transformation (`Relative Vanilla') and our robust version (`Relative Robust'), together with the baseline (`Absolute') where no transformation is deployed. 

\begin{table}[h!]
\renewcommand{\arraystretch}{1.3}
\caption{Performance comparison on zero-shot model stitching. }
\label{tab:multilingual-linear-biased}
\centering
\medskip
\resizebox{\linewidth}{!}{
\begin{tabular}{clccccccccc}
\toprule
 &
   &
  \multicolumn{3}{c}{Absolute} &
  \multicolumn{3}{c}{Relative Vanilla} &
  \multicolumn{3}{c}{Relative Robust} \\ \cmidrule(lr){3-5} 
 \cmidrule(lr){6-8}  \cmidrule(lr){9-11}
$\gamma$ &
  $\varphi$ &
  $\text{Acc ($\uparrow$)} $ &
  $\text{$\text{F}_1$ ($\uparrow$)} $ &
  $\text{MAE ($\downarrow$)} $ &
  $\text{Acc ($\uparrow$)} $ &
  $\text{$\text{F}_1$ ($\uparrow$)} $ &
  $\text{MAE ($\downarrow$)} $ &
  $\text{Acc ($\uparrow$)} $ &
  $\text{$\text{F}_1$ ($\uparrow$)} $ &
  $\text{MAE ($\downarrow$)} $ \\ \midrule
\multirow{2}{*}{\texttt{en}} &
  \texttt{en} &
  $59.26_{\pm 0.66}$&
  $58.27_{\pm 0.83}$&
  $49.52_{\pm 0.89}$&
  $38.84_{\pm 1.23}$&
  $23.50_{\pm 2.77}$&
  $84.95_{\pm 9.48}$&
  $60.84_{\pm 0.64}$&
  $60.30_{\pm 0.72}$&
  $45.35_{\pm 0.74}$\\
 &
  \texttt{fr} &
  $24.28_{\pm 10.11}$&
  $22.27_{\pm 8.86}$&
  $139.27_{\pm 35.32}$&
  $40.96_{\pm 2.40}$&
  $31.15_{\pm 3.29}$&
  $73.09_{\pm 5.18}$&
  $49.92_{\pm 1.51}$&
  $50.13_{\pm 1.60}$&
  $57.56_{\pm 1.60}$\\
 &
   &
   &
   &
   &
   &
   &
   &
   &
   &
   \\
\multirow{2}{*}{\texttt{fr}} &
  \texttt{en} &
  $24.96_{\pm 9.27}$&
  $23.19_{\pm 8.12}$&
  $132.35_{\pm 24.01}$&
  $35.42_{\pm 1.16}$&
  $20.86_{\pm 1.09}$&
  $79.68_{\pm 11.68}$&
  $60.74_{\pm 0.88}$&
  $60.18_{\pm 1.14}$&
  $45.19_{\pm 1.16}$\\
 &
  \texttt{fr} &
  $49.26_{\pm 1.04}$&
  $48.74_{\pm 0.73}$&
  $63.89_{\pm 1.50}$&
  $41.99_{\pm 3.18}$&
  $35.33_{\pm 4.55}$&
  $67.77_{\pm 2.24}$&
  $50.31_{\pm 0.88}$&
  $50.95_{\pm 0.82}$&
  $57.08_{\pm 1.22}$\\ \bottomrule
\end{tabular}
}
\end{table}

Table \ref{tab:multilingual-linear-biased} reports the results in terms of mean and standard deviation for $5$ experimental runs. On the cross-domain setup -- i.e., $\gamma = \texttt{en}$ and $\varphi =  \texttt{fr}$, and vice versa -- our robust version significantly outperforms the original one on all the metrics considered. Specifically, when stitching from English to French, accuracy and $\text{F}_1$ increase by around $9$ and $19$ respectively, while MAE decreases by $15$. The performance gain is even more drastic when stitching from French to English, with an increase of $25$ and $40$ in $\text{F}_1$ and accuracy, and a decrease of $34$ in MAE. The larger improvements in the latter setup can be explained by the fact that the RoBERTa models are pre-trained better in the English language. This results in fine-tuned representations $\varphi$ whose extracted features are, generally speaking, more transferable, mitigating the effect of the stitching technique deployed. The explanation is confirmed by the fact that the cross-domain performance without relative representations (Absolute column in the table) is better in the English-to-French setup than vice versa. Lastly, we remark that, surprisingly, the robust relative transformation exhibits slightly improved performance with respect to the Absolute version when the domain is not changed, i.e. when both $\gamma$ and $\varphi$ are set to either \texttt{fr} or \texttt{en}. This does not happen for the non-robust transformation. We conclude that not only our proposed transformation is advantageous to a large extent for transferring across domains, but the model can benefit from it even for the original task on low data regimes.

\subsection{Analysis of Topological Densification}

In this section, we assess the benefits of the topological densification loss in the context of relative representations, as described in Section~\ref{sec:maintopometh}. In all the following experiments, we deploy our robust version of the relative transformation, which has been assessed in the previous section.

\begin{figure}[h!]
     \centering
    \begin{subfigure}[t]{0.3\textwidth}
         \centering
         \includegraphics[width=\textwidth]{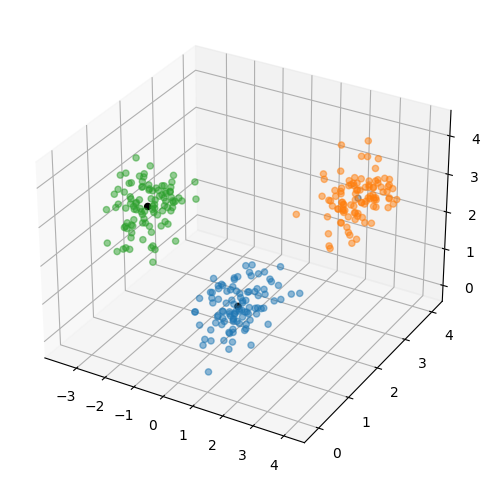}
        \caption{Three clusters w/ an anchor in each centroid}
         \label{fig:rel_ex_1}
     \end{subfigure}\hfill
      \begin{subfigure}[t]{0.3\textwidth}
         \centering
         \includegraphics[width=\textwidth]{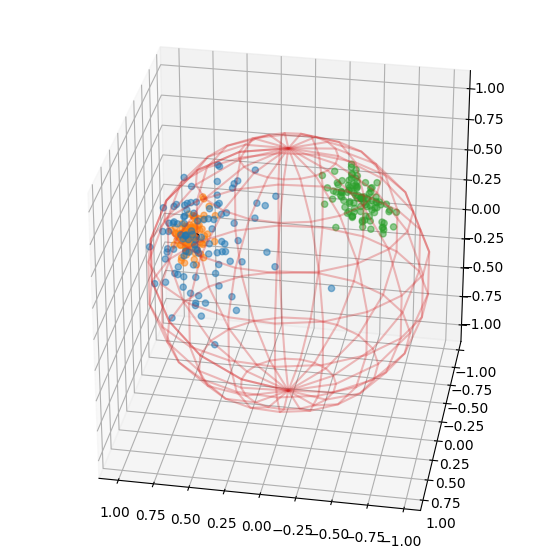}
        \caption{Normalization}
         \label{fig:rel_ex_2}
     \end{subfigure}\hfill
     \begin{subfigure}[t]{0.3\textwidth}
         \centering
         \includegraphics[width=\textwidth]{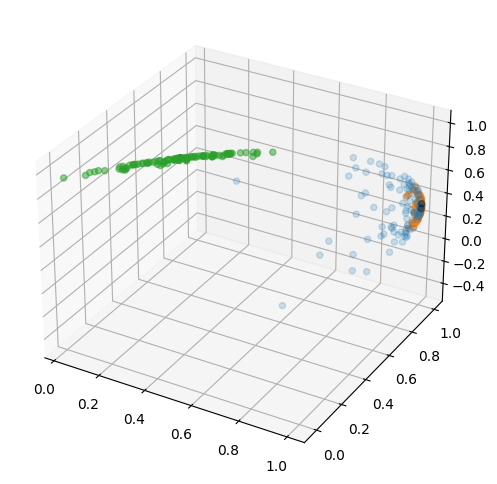}
        \caption{Cosine similarity}
         \label{fig:rel_ex_3}
     \end{subfigure}
    \caption{Non-cluster-preserving relative transformation example}
    \label{fig:rel_ex_bad}
\end{figure}

 \begin{wrapfigure}{R}{0.46\textwidth}
\centering
\includegraphics[width=0.98\linewidth]{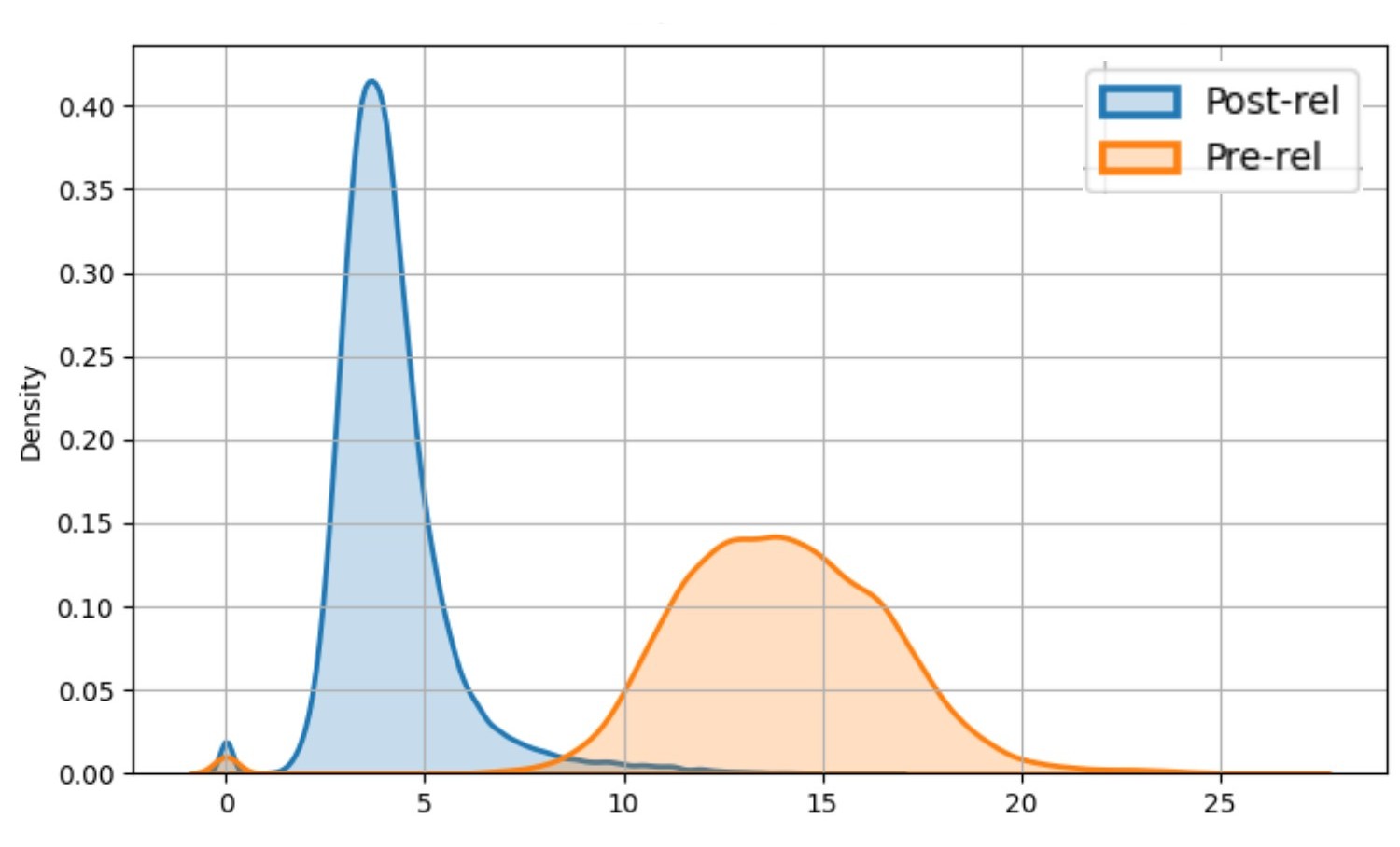} 
    \caption{Distribution of death times when training with post-relative topological densification on the English dataset.}
    \label{fig:distPost}
\end{wrapfigure}  


\begin{figure}[t!]
    \centering
    \includegraphics[width=1\textwidth]{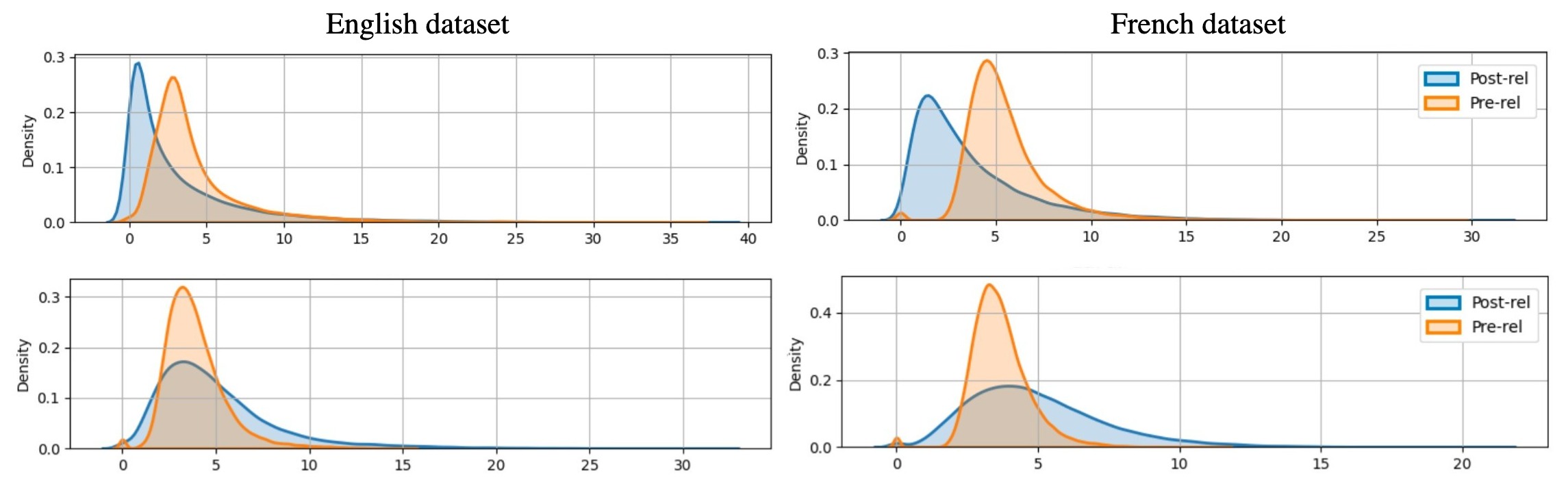}
    \caption{Distribution of death times on the English (left) and French (right) datasets. \textbf{Top}: without topological densification. \textbf{Bottom}: with a combination of pre-relative and post-relative topological densification. }
    \label{fig:distboth}
\end{figure}

We start by comparing the three possible approaches for implementing the topological densification -- see Figure \ref{fig:topoSetups}. Initially, we have investigated the pre-relative and post-relative options which, however, had negative impact on the performance. For the pre-relative case, a possible explanation is that the relative transformation can fail to preserve the topology of latent data. An example of this is presented in Figure~\ref{fig:rel_ex_bad}, where the representation consists of three clusters, two of which possess (approximately) collinear centroids. These two clusters are merged by the relative transformation, becoming indistinguishable. For the post-relative case, we hypothesize that the model discovers an anchor configuration that yields tighter clusters in the post-relative space, while preserving their spread in the pre-relative space. This is confirmed by examining the distribution of death times (see Figure \ref{fig:distPost}), which exhibit a significantly smaller mean in the post-relative space than in the pre-relative one. We believe this configuration can lead to information bottlenecks, because compressing the clusters with this nonlinear transformation might cause a loss of expressiveness in the latent space.

We conclude that the best option is deploying the two topological regularization losses simultaneously, which we implement via a linear combination of them with two weights $\lambda_1$ (pre-relative) and $\lambda_2$ (post-relative). Through an extensive hyperparameter search, we discovered the optimal values of $\lambda_1 = 2 \times 10^{-3}$ and $\lambda_2 = 1.8 \times 10^{-2}$, along with a topological densification parameter $\beta=3$ for both the French and English datasets. With this setup, the pre-relative and post-relative distributions of death times overlap -- see Figure \ref{fig:distboth}. This addresses the above-mentioned challenge, and indicates that, with this setup, the relative transformation preserves the topological information.

Table \ref{tab:multilingual-linear-topo-paired} reports the performance scores when fine-tuning with topological regularization for $5$ experimental runs. As compared to Table \ref{tab:multilingual-linear-biased}, a slight improvement is evident in the cross-domain context. Specifically, for the English-to-French setup, all the scores improve by around $0.5$, while they improve by around $1$ in the French-to-English setup. A similar increase can be observed when the domain is not changed. This shows that topological densification is beneficial for transferring between domains. However, it comes with a higher computational cost, as discussed in Section \ref{sec:topoDens}. This additional cost concerns only the training phase, while the zero-shot transfer procedure is unaffected.

\begin{table}[h!]
\caption{Performance with topological densification.}
\label{tab:multilingual-linear-topo-paired}
\renewcommand{\arraystretch}{1.3}
\medskip
\centering
\resizebox{0.51\textwidth}{!}{
\begin{tabular}{clccc}
\toprule
   &    & \multicolumn{3}{c}{Relative Robust} \\
 \cmidrule(lr){3-5} 
 $\gamma$ & $\varphi$ & $\text{Acc ($\uparrow$)}   $ & $\text{$\textnormal{F}_1$ ($\uparrow$)}   $ & $\text{MAE ($\downarrow$)}   $ \\
\midrule
\multirow{2}{*}{\texttt{en}} & \texttt{en} &  $61.16_{\pm 0.42}$ &  $61.26_{\pm 0.18}$ &  $44.63_{\pm 0.26}$ \\
   & \texttt{fr} &  $50.48_{\pm 1.04}$ &  $50.85_{\pm 1.25}$ &  $57.70_{\pm 0.73}$ \\
&\\
\multirow{2}{*}{\texttt{fr}} & \texttt{en} &  $60.93_{\pm 0.56}$ &  $61.23_{\pm 0.46}$ &  $44.54_{\pm 0.51}$ \\
   & \texttt{fr} &  $50.63_{\pm 0.79}$ &  $50.97_{\pm 0.85}$ &  $57.76_{\pm 0.71}$ \\
\bottomrule
\end{tabular}
}

\end{table}

\WFclear

\section{Conclusions, Limitations, and Future Work}
In this work, we have introduced two improvements to relative representations. These improvements consist in a novel normalized version of the relative transformation, and in the deployment of a topological regularization loss in the fine-tuning procedure. We have investigated empirically our proposals on a natural language task, showcasing improved performance as compared to the original relative representations. 

Our robust relative transformation is invariant to non-isotropic rescalings and permutations, which are the only symmetries of common activation functions. However, the original relative transformation is invariant to all the isometries (and isotropic rescalings) of the latent space. Even though trading off non-permutational isometries with non-isotropic rescalings is advantageous in high dimensions (see discussion after Definition \ref{prop:rob_rel}), the challenge of designing a robust relative transformation that is invariant to all isometries and all rescalings remains open. This would lead to an even more robust zero-shot model stitching procedure, and therefore represents a fundamental challenge for future research.  

Another future direction from the topological perspective is exploring topological regularization procedures beyond densification. This includes extensions of the latter to higher-dimensional persistent homology, as discussed at the end of Section \ref{sec:topoDens} and in Section \ref{sec:exTH}, or alternative regularization losses -- see \cite{hensel2021survey} for an overview. The core challenges behind scaling up such regularizers are their computational complexity and the difficulty of batching. Yet, they might result in advantages for either general or specific domains when applying model stitching techniques.    

\section*{Acknowledgements}
This work was partially supported by the Wallenberg AI, Autonomous Systems and Software Program (WASP) funded by the Knut and Alice Wallenberg Foundation. Alejandro García Castellanos is funded by the Hybrid Intelligence Center, a 10-year programme funded through the research programme Gravitation which is (partly) financed by the Dutch Research Council (NWO).
\appendix
{
\bibliographystyle{plainnat}
\bibliography{references}
}
\newpage
\appendix

\section{Extended Topological Densification}\label{sec:exTH}


In Section \ref{sec:topoDens} we describe $0$-th persistent homology in relation to topological densification \cite{hofer_densified_2021}. We now present a concise definition of $n$-th persistent homology for $n\geq0$,   and elaborate on possible generalizations of the topological densification loss (Definition \ref{def:topdens}). For an introduction to persistent homology and topological data analysis we refer the reader to \cite{Carlsson2009}.

The persistent homology pipeline can be subdivided in steps:

{\bf From Data to Geometry.} The first step consists in associating 
a nested sequence (i.e., a filtration) of simplicial complexes to the data. Simplicial complexes are simple geometric objects that generalize the notion of a graph and can be described combinatorially. 
If data is in the form of a finite metric  space $\mathcal{D}$ the filtration of simplicial complexes can be for example constructed by computing the Vietoris-Rips complex, the Cech complex or the Witness complex at scale $\varepsilon$,  for increasing values of the parameter $\varepsilon \in \mathbb{R}$. 

{\bf From Geometry to Algebra.} The topology of a simplicial complex and in particular its homology can be computed in an algorithmic way through simplicial homology. Fixed a natural number $n$ and a coefficient field, the $n$-th homology of a simplicial complex is a vector space encoding  connectivity of the simplicial complex. The $0$-th homology describes connected components, $1$-st homology describes cycles that are not bounded by faces, $2$-nd homology describes voids and for $n\geq 2$ one can think of higher dimensional analogues. In the cases of a sequence of simplicial complexes, $n$-th homology can be applied to the whole sequence to obtain a sequence of vector spaces and linear maps. This is referred to $n$-th persistent homology module.

{\bf Representing Persistent Homology.}
Persistent homology modules are well known and simple algebraic objects, due for example to the fact that the sequence of vector spaces constituting a persistence module is indexed by $\mathbb{R}$ and that it is essentially discrete. Due to Gabriel's theorem, or equivalently the decomposition theorem of modules over a Principle Ideal Domain, persistent homology modules admit a barcode decomposition, they can be decomposed as sum of addends called bars. A bar is completely described by a pair of real numbers: the birth and the death of the bar. The collection of all birth and death pairs of the bars in a barcode decomposition of a persistence module is the so called persistence diagram. Persistence diagrams and their features are used both as data descriptors and for improving data representations in deep learning, for example through regularization.

As in Definition \ref{def:topdens} we denote by $\mathcal{D}$ the dataset, by  $\varphi  \colon \mathcal{X} \rightarrow \mathcal{Z}$ an encoder mapping data to a latent space, and assume the data is partitioned into classes $\mathcal{D} = \mathcal{D}_1 \sqcup \cdots \sqcup \mathcal{D}_K$. The topological densification loss is defined as
\begin{equation*}
\mathcal{R} = \sum_{i=1}^{K} \sum_{w \in \dagger(\varphi(\mathcal{D}_i))} |w - \beta|.  
\end{equation*}

where $\dagger(\varphi(\mathcal{D}_i))$
is the set of death times in the persistence diagram of $\varphi(D_i)$. For $0$-th persistence, death times correspond to lengths of bars in a barcode, since all bars start at parameter value $0$. Consider now the case of $n$-th persistent homology of the  $\varphi(\mathcal{D}_i)$ and the corresponding persistence diagrams $\ddagger(\varphi(\mathcal{D}_i))$. The formula for of the topological densification loss can be generalized as: 
\begin{equation*}
\mathcal{R} = \sum_{i=1}^{K} \sum_{(a,b) \in \ddagger(\varphi(\mathcal{D}_i))} |\ell(a,b) - \beta|.  
\end{equation*}

where $l$ is a real valued function assigning a weight to each point in the persistence diagram. Examples of such functions are the length of a bar $\ell(a,b)=b-a$ or the lifespan of a bar \cite{Agerberg2023} -- a generalization of length defined as $\ell_{F}(a,b)=F(b)-F(a)$, where $F\colon \mathbb{R}\rightarrow \mathbb{R}$ is an increasing bijection.

\section{Batch Construction}\label{sec:appbatch}
As mentioned in Section \ref{sec:maintopometh}, the original work on topological densification \cite{hofer_densified_2021} proposes the following batch construction. A batch consists of $b$ sub-batches, where each sub-batch contains $n$ samples from the same class.

\begin{figure}[!ht]
    \centering
    \includegraphics[width=0.5\textwidth]{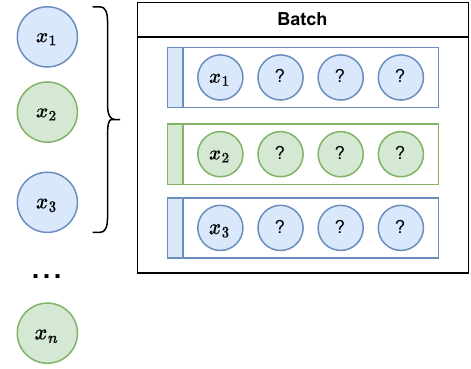} 
    \caption{The original dataloader for topological densification.}
    \label{fig:hoferData}
\end{figure}

More specifically, the dataloader in the original work is implemented as follows (see Figure~\ref{fig:hoferData} for an illustration): 
\begin{enumerate}
    \item A batch of size $b$ is sampled from the original dataset.
    \item For each datapoint $x$ in the batch, a sub-batch is constructed by sampling with replacement $n-1$ more datapoints with the same class as $x$. 
\end{enumerate}  
This approach comes with two major issues. Firstly, in the presence of significant class imbalance, classes are likely to be under-represented within batches. This can potentially lead to catastrophic forgetting and unstable training. Secondly, this method is computationally intensive, since training requires processing the original dataset $n$ times in a single epoch. 

\begin{figure}[!ht]
    \centering
    \includegraphics[width=0.7\textwidth]{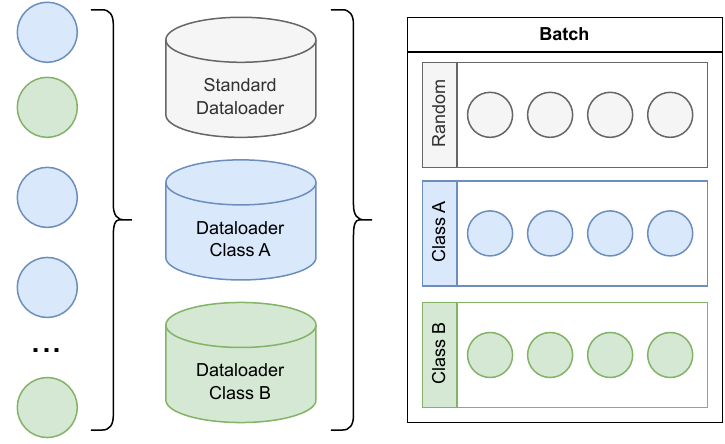} 
    \caption{Our dataloader for topological densification.}
    \label{fig:hoferDataFix}
\end{figure}

We propose a new dataloader that follows the same high-level principle, but addresses the above concerns. Our dataloader is implemented as follows (see Figure \ref{fig:hoferDataFix} for an illustration):
\begin{enumerate}
    \item For each class, we create a separate dataloader containing all the data from that class. Additionally, we create a standard dataloader containing all the data. 
    
    \item We aggregate $n$ samples from each dataloader, resulting in a batch consisting of $K + 1$ sub-batches, where $K$ is the number of classes.
    
\end{enumerate}

This new dataloader preserves the class structure necessary for applying the topological regularization loss. Its computational overhead is comparable to training directly with the standard dataloader. Moreover, it prevents class imbalance, since each class is guaranteed to be represented at least $n$ times within a batch.  

The introduction of the sub-batch from the standard dataloader is relevant for the normalization procedure involved in our robust relative transformation. Namely, we use the (encoding via $\varphi$ of the) sub-batch from the standard dataloader as $\mathcal{B}$ (notation from Section~\ref{sec:robreltrans}) when fine-tuning robust relative representations. This is crucial, for example, when the original dataset exhibits class imbalance. In that case, the normalization should be performed with a set $\mathcal{B}$ that is representative of the original dataset -- which is addressed by the sub-batch from the latter -- while the topological loss still benefits from class-balanced data -- which is addresses by the other class-specific sub-batches.

\end{document}